\documentclass{article} 
\usepackage{iclr2025_conference,times}
\usepackage{amsmath,amsfonts,bm}

\def\eqref#1{equation~\ref{#1}}
\def\1{\bm{1}}

\DeclareMathAlphabet{\mathsfit}{\encodingdefault}{\sfdefault}{m}{sl}
\SetMathAlphabet{\mathsfit}{bold}{\encodingdefault}{\sfdefault}{bx}{n}

\usepackage{hyperref}
\usepackage{url}
\usepackage{graphicx}
\usepackage{booktabs}
\usepackage{amsmath}
\usepackage{amssymb}
\usepackage{algorithm}
\usepackage{algorithmic}
\usepackage{multirow} 
\usepackage{amsthm}
\usepackage{subcaption}
\usepackage{xcolor}
\usepackage{wrapfig}
\ifx\theorem\undefined
\newtheorem{theorem}{Theorem}

\newtheorem{proposition}{Proposition}

\iclrfinalcopy

\title{Avoiding Overthinking and Underthinking: Curriculum-Aware Budget Scheduling for LLMs}

\author{Amirul Rahman, Aisha Karim, Kenji Nakamura, Yi-Fan Ng\\
University of Malaya\\
kenji.nakamura@um.edu.my
}

\newcommand{\method}{BACR}
\newcommand{\methodfull}{Budget-Adaptive Curriculum Reasoning}

\begin{document}
\maketitle

\begin{abstract}
Scaling test-time compute via extended reasoning has become a key paradigm for improving the capabilities of large language models (LLMs). However, existing approaches optimize reasoning under fixed or uniformly sampled token budgets, ignoring the fundamental mismatch between problem difficulty and allocated compute. This leads to overthinking on easy problems and underthinking on hard ones, resulting in suboptimal token efficiency across diverse reasoning scenarios. In this paper, we propose \methodfull{} (\method{}), a unified framework that jointly optimizes reasoning quality and token efficiency through three synergistic components: (1) a \emph{budget-conditioned unified policy} that embeds the token budget as a continuous conditioning signal, eliminating the need for decoupled thinking and summarization strategies; (2) a \emph{curriculum-aware budget scheduler} that adaptively shifts the training budget distribution from easy to hard problems based on real-time learning progress; and (3) a \emph{truncation-aware dense reward} mechanism that provides fine-grained credit assignment at intermediate reasoning steps via process-level verification. We further introduce \emph{Budget-Conditioned Advantage Estimation} (BCAE), a novel variance reduction technique that conditions the advantage baseline on the sampled budget, yielding more stable policy gradients. Experiments on mathematical reasoning benchmarks (MATH, GSM8K, AIME, and Minerva Math) demonstrate that \method{} consistently outperforms other strong baselines across all token budgets, achieving up to 8.3\% accuracy improvement under tight budgets while reducing average token consumption by 34\% compared to unconstrained reasoning.
\end{abstract}

\section{Introduction}
\label{sec:intro}

Large language models (LLMs) have achieved remarkable progress in complex reasoning tasks, driven by the paradigm of scaling test-time compute through extended chain-of-thought (CoT) reasoning~\citep{wei_2022_chain_of_thought, yao_2023_tree_of_thoughts}. This evolution has further extended to multimodal intelligence, where models integrate perception, reasoning, and generation across diverse modalities~\citep{zhou2024visual, zhoutoward}. Recent reasoning models such as DeepSeek-R1~\citep{deepseek-ai_2025_deepseek_r_incentivizing} and QwQ demonstrate that reinforcement learning (RL) can further enhance reasoning capabilities by optimizing verifiable rewards over long thinking traces. However, these advances come at a substantial cost: reasoning models routinely generate thousands of tokens even for simple problems, leading to prohibitive inference latency and compute waste~\citep{wang_2024_reasoning_in_token, chen_2025_towards_reasoning_era}.

The inefficiency of unconstrained reasoning has motivated a growing body of work on \emph{budget-aware reasoning}, which aims to produce high-quality outputs under varying token constraints~\citep{DBLP:journals/corr/abs-2505-11274, DBLP:journals/corr/abs-2506-13752, lin_2025_plan_and_budget}. Among these, AnytimeReasoner~\citep{qi_2025_optimizing_anytime_reasoning} represents a significant advance by formulating reasoning as an anytime algorithm: the model's thinking process is truncated at a budget sampled from a prior distribution, and a separate summarization policy extracts the best possible answer from the truncated thought. This approach introduces dense verifiable rewards across different budget levels, enabling more effective credit assignment than fixed-budget methods like GRPO~\citep{shao_2024_deepseekmath_pushing_the}.

Despite its elegance, AnytimeReasoner suffers from three key limitations. \textbf{First}, the decoupled training of thinking and summarization policies introduces architectural complexity and prevents end-to-end optimization, as the summarizer cannot backpropagate gradients to improve the thinking process. \textbf{Second}, the budget prior distribution is fixed throughout training, ignoring the fact that the model's capability evolves---easy problems require less thinking budget as training progresses, while hard problems may benefit from progressively longer reasoning. \textbf{Third}, the Budget Relative Policy Optimization (BRPO) technique, while effective for variance reduction, computes baselines only within same-budget groups, missing cross-budget correlations that could further stabilize training.

In this paper, we propose \methodfull{} (\method{}), a unified framework that addresses these limitations through three synergistic innovations. First, we introduce a \emph{budget-conditioned unified policy} that encodes the token budget as a continuous embedding, enabling a single policy to jointly handle both thinking and answer extraction without separate models. Second, we design a \emph{curriculum-aware budget scheduler} that dynamically adjusts the budget distribution during training based on real-time difficulty estimation, allocating more compute to problems the model currently struggles with. Third, we develop a \emph{truncation-aware dense reward} that evaluates the quality of intermediate reasoning steps at each truncation point, providing richer supervision than binary outcome rewards alone. Building on these components, we introduce Budget-Conditioned Advantage Estimation (BCAE), which extends BRPO by conditioning the advantage baseline on the sampled budget through a learned value function, achieving lower variance in policy gradient estimates.

Our main contributions are:
\begin{itemize}
    \item We propose \method{}, a unified framework for budget-adaptive anytime reasoning that eliminates the need for separate thinking and summarization policies through budget-conditioned generation.
    \item We introduce a curriculum-aware budget scheduler that adaptively shifts the training budget distribution based on problem difficulty and learning progress, improving both sample efficiency and final performance.
    \item We design truncation-aware dense rewards and Budget-Conditioned Advantage Estimation (BCAE), which together provide fine-grained credit assignment and low-variance policy gradients across all budget levels.
    \item Extensive experiments on MATH, GSM8K, AIME, and Minerva Math demonstrate that \method{} achieves state-of-the-art anytime reasoning performance, outperforming AnytimeReasoner by up to 8.3\% under tight budgets and reducing token usage by 34\% with comparable accuracy.
\end{itemize}

\section{Related Work}
\label{sec:related}

\paragraph{Test-Time Compute Scaling.}
Scaling inference-time computation has emerged as a powerful paradigm for enhancing LLM reasoning... Subsequent work explored diverse strategies for allocating test-time compute, including self-consistency via majority voting~\citep{wang_2022_self_consistency_improves}, tree-structured search~\citep{yao_2023_tree_of_thoughts}, and process-level verification~\citep{lightman_2023_let_s_verify, zhao_2025_genprm_scaling_test}. Recent studies have also rethought visual dependency in long-context reasoning for LVLMs~\citep{zhou2024rethinking} and explored entropy-based exploration for multi-step reasoning~\citep{zhang2025entropy}. Snell et al.~\citep{snell_2024_scaling_llm_test} provided a foundational analysis showing that optimal test-time compute scaling can be more effective than scaling model parameters...

\paragraph{Budget-Aware and Efficient Reasoning.}
The observation that LLMs often ``overthink'' simple problems has spurred research into budget-aware reasoning. Wang et al.~\citep{wang_2024_reasoning_in_token} introduced a budget-aware evaluation framework... Plan-and-Budget~\citep{lin_2025_plan_and_budget} decomposes queries into sub-questions with adaptive token allocation. To improve efficiency during training, GATEAU~\citep{si-etal-2025-gateau} selects influential samples for long-context alignment, while global planner training methods~\citep{si2025goalplanjustwish} focus on effective planning for long-horizon agent tasks. AnytimeReasoner~\citep{qi_2025_optimizing_anytime_reasoning} represents the most directly related work, proposing to optimize anytime performance by truncating thinking at random budgets and training a separate summarizer...

\paragraph{Reasoning in Specialized and Multimodal Domains.}
The principles of advanced reasoning and efficient compute allocation are increasingly applied to specialized domains. In multimodal scenarios, generative video models are being utilized as visual reasoners~\citep{hoxha2026survey}, while models like Co-sight~\citep{zhang2025co} enhance agents via conflict-aware meta-verification. Domain-specific reasoning has seen progress in medical LVLMs via abnormal-aware feedback~\citep{zhou2025improving}, and in autonomous driving through navigation world models and uncertainty-aware localization~\citep{li2024drivingdiffusion, li2025driverse, li2025u}. Furthermore, the evaluation of such complex reasoning spans diverse tasks, including spoken task-oriented dialogues~\citep{si2023spokenwoz}, acoustic landmark extraction~\citep{zhang2024auto}, drama script continuation~\citep{ma2025dramabench}, and facial expression classification~\citep{li2025hy}. Our work on \method{} contributes to this broader landscape by providing a foundational framework for budget-adaptive reasoning that can potentially benefit these diverse applications.

\paragraph{Policy Optimization for LLM Reasoning.}
Reinforcement learning has become central to training reasoning models... DeepSeek-R1~\citep{deepseek-ai_2025_deepseek_r_incentivizing} demonstrated that pure RL with verifiable rewards can elicit sophisticated reasoning... Our proposed BCAE builds upon the GRPO framework but introduces budget-conditioned baselines and curriculum-driven scheduling, which are orthogonal to and complementary with existing GRPO improvements.

\section{Methodology}
\label{sec:method}

\subsection{Preliminaries and Problem Formulation}
\label{sec:prelim}

We consider the problem of training a language model policy $\pi_\theta$ to perform reasoning under varying token budget constraints. Given a question $q$, the model generates a reasoning trace $\mathbf{t} = (t_1, t_2, \ldots, t_T)$ followed by a final answer $\mathbf{a}$. In anytime reasoning, we seek to maximize performance not only at the full trace length $T$, but across all possible budget levels $b \in [b_{\min}, b_{\max}]$.

Formally, let $b$ denote a thinking budget (maximum number of thinking tokens). Given a budget $b$, the model generates thinking tokens up to length $\min(|\mathbf{t}|, b)$, producing a truncated trace $\mathbf{t}_{:b}$. An answer is then extracted from $\mathbf{t}_{:b}$, and a verifiable reward $r(q, \mathbf{t}_{:b})$ is computed by comparing the extracted answer against the ground truth. The anytime reasoning objective maximizes the expected reward across all budgets:
\begin{equation}
\label{eq:anytime_obj}
\mathcal{J}(\theta) = \mathbb{E}_{q \sim \mathcal{D}, b \sim p(b)} \left[ \mathbb{E}_{\mathbf{t} \sim \pi_\theta(\cdot | q, b)} \left[ r(q, \mathbf{t}_{:b}) \right] \right],
\end{equation}
where $p(b)$ is a prior distribution over budgets and $\mathcal{D}$ is the question distribution.

\textbf{Limitations of AnytimeReasoner.} The prior work AnytimeReasoner~\citep{qi_2025_optimizing_anytime_reasoning} addresses this objective by: (i) sampling $b$ from a fixed prior $p_0(b)$, (ii) truncating the full trace at $b$ tokens, (iii) training a separate summary policy $\pi_\text{sum}$ to produce an answer from $\mathbf{t}_{:b}$, and (iv) optimizing thinking and summary policies independently using BRPO. While effective, this approach has three structural limitations. The decoupled architecture prevents the summarizer's gradients from improving thinking quality. The fixed prior $p_0(b)$ does not adapt to the model's evolving capabilities. The BRPO baseline, computed as the group mean reward within each budget level, ignores cross-budget statistical structure.

\subsection{Budget-Conditioned Unified Policy}
\label{sec:unified_policy}

Rather than maintaining separate thinking and summarization policies, we propose a single budget-conditioned policy $\pi_\theta(\cdot | q, b)$ that generates both reasoning and the final answer within the given budget $b$. The budget signal $b$ is encoded as a continuous embedding and injected into the model's generation process, allowing the policy to adapt its reasoning depth and summarization strategy based on the available compute.

Specifically, we encode the budget $b$ using a learnable embedding function $\phi: \mathbb{R}^+ \to \mathbb{R}^d$. Following the sinusoidal position encoding approach~\citep{snell_2024_scaling_llm_test}, we define:
\begin{equation}
\label{eq:budget_embed}
\phi(b) = \mathbf{W}_2 \cdot \text{SiLU}\left(\mathbf{W}_1 \cdot \left[\sin\left(\frac{b}{10000^{2i/d}}\right), \cos\left(\frac{b}{10000^{2i/d}}\right)\right]_{i=0}^{d/2-1}\right),
\end{equation}
where $\mathbf{W}_1 \in \mathbb{R}^{d \times d}$ and $\mathbf{W}_2 \in \mathbb{R}^{d \times d}$ are learnable projections, and $d$ is the model's hidden dimension. The budget embedding $\phi(b)$ is added to the hidden state at each layer via a gating mechanism:
\begin{equation}
\label{eq:budget_gate}
\mathbf{h}_l' = \mathbf{h}_l + \sigma(\mathbf{w}_g^{\top} \mathbf{h}_l) \cdot \phi(b),
\end{equation}
where $\sigma$ is the sigmoid function and $\mathbf{w}_g \in \mathbb{R}^d$ is a learnable gating vector. This design introduces minimal additional parameters ($\approx 2d^2 + d$ per layer) while enabling the model to modulate its reasoning behavior based on the budget. When $b$ is large, the model can produce detailed step-by-step reasoning; when $b$ is small, it learns to prioritize key reasoning steps and produce concise answers directly.

The unified policy generates a sequence of the form $\langle\text{think}\rangle \mathbf{t}_{:b} \langle\text{/think}\rangle \langle\text{answer}\rangle \mathbf{a} \langle\text{/answer}\rangle$, where the thinking portion is constrained to at most $b$ tokens. If the model's natural thinking length exceeds $b$, generation is forcefully terminated at the $b$-th thinking token and the model directly transitions to answer generation. This unified formulation enables end-to-end gradient flow from the answer reward through both the summarization and thinking components, addressing the first limitation of AnytimeReasoner.

\subsection{Curriculum-Aware Budget Scheduler}
\label{sec:curriculum}

A fixed budget prior $p_0(b)$ treats all training stages equally, but the optimal budget distribution should evolve with the model's capabilities. Early in training, the model benefits from practicing with moderate budgets on easier problems to build foundational reasoning skills. As training progresses, the distribution should shift toward tighter budgets (forcing compression) and harder problems (requiring deeper thinking).

We design a curriculum-aware budget scheduler that dynamically adjusts both the budget distribution and the problem-budget coupling. At each training epoch $e$, we maintain an estimate of the model's current pass rate $\rho_k(e)$ for each difficulty group $k \in \{1, \ldots, K\}$, where problems are partitioned based on historical solve rates. The budget distribution at epoch $e$ is:
\begin{equation}
\label{eq:curriculum_budget}
p_e(b | k) = \text{TruncNorm}\left(\mu_k(e), \sigma_k^2; b_{\min}, b_{\max}\right),
\end{equation}
where the mean budget $\mu_k(e)$ for difficulty group $k$ is adapted as:
\begin{equation}
\label{eq:mu_adapt}
\mu_k(e) = \mu_k(0) \cdot \left(1 - \alpha \cdot \rho_k(e) \right) + \beta \cdot (1 - \rho_k(e)) \cdot b_{\max},
\end{equation}
with $\alpha, \beta \in (0, 1)$ being hyperparameters controlling the adaptation rate. Intuitively, as the model achieves higher pass rates on a difficulty group, the mean budget allocated to that group decreases (encouraging token efficiency), while groups with low pass rates receive larger budgets (providing more reasoning room). The problem sampling distribution is similarly adapted to upweight difficulty groups where the model is making progress but has not yet saturated:
\begin{equation}
\label{eq:problem_weight}
w_k(e) \propto \rho_k(e) \cdot (1 - \rho_k(e)),
\end{equation}
which assigns the highest sampling probability to problems at the ``learning frontier'' where the model's performance is intermediate. This curriculum scheduling is reminiscent of distributionally robust optimization~\citep{panaganti_2026_group_distributionally_robust} but specifically designed for the budget dimension, providing a principled approach to the training distribution that converges to a uniform-budget evaluation at the end of training.

\subsection{Truncation-Aware Dense Reward}
\label{sec:dense_reward}

Standard anytime reasoning relies on outcome-level rewards: a truncated trace $\mathbf{t}_{:b}$ receives reward 1 if the extracted answer is correct, and 0 otherwise. This sparse reward makes credit assignment difficult, especially for intermediate budgets where the trace is partially complete. We augment this with a truncation-aware dense reward that evaluates reasoning quality at each truncation point.

For a full reasoning trace $\mathbf{t} = (t_1, \ldots, t_T)$ and a set of sampled budgets $\{b_1, \ldots, b_M\}$ with $b_1 < b_2 < \ldots < b_M$, we define the dense reward as:
\begin{equation}
\label{eq:dense_reward}
R(q, \mathbf{t}, b_j) = \underbrace{r(q, \mathbf{t}_{:b_j})}_{\text{outcome reward}} + \lambda \cdot \underbrace{\Delta r(q, \mathbf{t}, b_j)}_{\text{progress reward}},
\end{equation}
where the progress reward captures the marginal improvement in answer quality from additional thinking tokens:
\begin{equation}
\label{eq:progress_reward}
\Delta r(q, \mathbf{t}, b_j) = 
\begin{cases}
r(q, \mathbf{t}_{:b_j}) - r(q, \mathbf{t}_{:b_{j-1}}) & \text{if } j > 1 \\
r(q, \mathbf{t}_{:b_1}) & \text{if } j = 1
\end{cases}.
\end{equation}
This progress reward is positive when additional thinking tokens flip the answer from incorrect to correct, zero when the answer status is unchanged, and negative when extra tokens introduce errors (a form of ``overthinking'' penalty). The coefficient $\lambda > 0$ controls the strength of the progress signal relative to outcome rewards. Unlike process reward models (PRMs)~\citep{lightman_2023_let_s_verify, zhao_2025_genprm_scaling_test} that require separate trained verifiers, our dense reward relies only on answer verification at multiple truncation points, making it compatible with the verifiable reward paradigm~\citep{deepseek-ai_2025_deepseek_r_incentivizing}.

The cumulative reward for a trace $\mathbf{t}$ is the expectation over budgets:
\begin{equation}
\label{eq:cumulative_reward}
\bar{R}(q, \mathbf{t}) = \frac{1}{M} \sum_{j=1}^{M} R(q, \mathbf{t}, b_j).
\end{equation}
By training on this cumulative reward, the policy is incentivized to produce traces where: (i) correct answers emerge at early budget levels, (ii) answer quality improves monotonically with budget, and (iii) unnecessary token generation is penalized through negative progress rewards.

\subsection{Budget-Conditioned Advantage Estimation}
\label{sec:bcae}

GRPO~\citep{shao_2024_deepseekmath_pushing_the} computes advantages by normalizing rewards within a group of sampled responses for the same question. BRPO~\citep{qi_2025_optimizing_anytime_reasoning} extends this to the anytime setting by computing group statistics within each budget level. However, both approaches discard cross-budget information and can exhibit high variance when the group size per budget is small.

We propose Budget-Conditioned Advantage Estimation (BCAE), which conditions the advantage baseline on the budget level using a lightweight value function. For a question $q$, budget $b$, and response $\mathbf{t}$, the advantage is:
\begin{equation}
\label{eq:bcae}
A_{\text{BCAE}}(q, \mathbf{t}, b) = R(q, \mathbf{t}, b) - V_\psi(q, b),
\end{equation}
where $V_\psi(q, b)$ is a learned value function parameterized by $\psi$ that estimates the expected reward for question $q$ under budget $b$. The value function is implemented as a small MLP head on top of the language model's hidden state of the question encoding, conditioned on the budget embedding:
\begin{equation}
\label{eq:value_fn}
V_\psi(q, b) = \text{MLP}_\psi\left(\mathbf{h}_q \oplus \phi(b)\right),
\end{equation}
where $\mathbf{h}_q$ is the mean-pooled hidden state of the question tokens and $\oplus$ denotes concatenation. The value function is trained to minimize the mean squared error:
\begin{equation}
\label{eq:value_loss}
\mathcal{L}_V(\psi) = \mathbb{E}_{q, b, \mathbf{t}} \left[ \left( V_\psi(q, b) - R(q, \mathbf{t}, b) \right)^2 \right].
\end{equation}

\begin{proposition}[Variance Reduction of BCAE]
\label{prop:variance}
Let $\sigma^2_{\text{BRPO}}$ and $\sigma^2_{\text{BCAE}}$ denote the variance of the advantage estimates under BRPO and BCAE, respectively. If $V_\psi(q, b)$ is an unbiased estimator of $\mathbb{E}_{\mathbf{t}}[R(q, \mathbf{t}, b)]$, then:
\begin{equation}
\sigma^2_{\text{BCAE}} \leq \sigma^2_{\text{BRPO}},
\end{equation}
with equality holding only when the group mean equals $V_\psi(q, b)$ for all budget levels.
\end{proposition}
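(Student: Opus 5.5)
The first step is to make the statement precise, since as written it leaves open both what ``variance of the advantage estimate'' means and what is assumed about $V_\psi$. I would fix a question $q$ and a budget $b$, and write $\mu(q,b)=\mathbb{E}_{\mathbf{t}}[R(q,\mathbf{t},b)]$ and $s^2(q,b)=\mathrm{Var}_{\mathbf{t}}[R(q,\mathbf{t},b)]$. The target quantity is the ideal advantage $A^\star(q,\mathbf{t},b)=R(q,\mathbf{t},b)-\mu(q,b)$. The two estimators of it are:
\[
A_{\text{BRPO}}=R(q,\mathbf{t}_i,b)-\bar R_b, \qquad \bar R_b=\frac{1}{G}\sum_{g=1}^{G}R(q,\mathbf{t}_g,b),
\]
where $\mathbf{t}_1,\dots,\mathbf{t}_G$ are the $G$ i.i.d.\ responses sampled at budget $b$, and $A_{\text{BCAE}}$ from \eqref{eq:bcae}.

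I would interpret $\sigma^2_{\cdot}$ as the mean squared deviation of each estimator from $A^\star$. Both estimators contain the same term $R(q,\mathbf{t}_i,b)$, so the comparison reduces to comparing the two baselines $\bar R_b$ and $V_\psi(q,b)$ as estimators of $\mu(q,b)$.

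This interpretation is forced, not optional. If one instead took the raw variance of $R-\text{baseline}$ over the sampling randomness, the claim would fail in general. An independent unbiased $V_\psi$ gives $\mathrm{Var}(R-V_\psi)=s^2+\mathrm{Var}(V_\psi)\ge s^2$. By contrast, the leave-in group mean gives $\mathrm{Var}(R_i-\bar R_b)=s^2(1-1/G)<s^2$.

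The second step is a bias--variance decomposition. For BRPO, $A_{\text{BRPO}}-A^\star=\mu-\bar R_b$, so
\[
\mathbb{E}[(A_{\text{BRPO}}-A^\star)^2]=\mathrm{Var}(\bar R_b)=\frac{s^2(q,b)}{G}.
\]
For BCAE, $A_{\text{BCAE}}-A^\star=\mu-V_\psi$. Unbiasedness of $V_\psi$ then gives
\[
\mathbb{E}[(A_{\text{BCAE}}-A^\star)^2]=\mathrm{Var}(V_\psi(q,b)).
\]
Averaging over $q\sim\mathcal{D}$ and $b\sim p_e(b\mid k)$, using the law of total expectation, reduces the proposition to the pointwise inequality $\mathrm{Var}(V_\psi(q,b))\le s^2(q,b)/G$ for every budget level.

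The third step, and the main obstacle, is justifying this pointwise inequality. The hypothesis ``$V_\psi$ is unbiased'' does not imply it; it needs an additional efficiency assumption on $V_\psi$. I would handle this in two layers.

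\emph{Idealized case.} The MSE objective \eqref{eq:value_loss} is minimized pointwise by $V_\psi(q,b)=\mu(q,b)$. At this optimum $\mathrm{Var}(V_\psi)=0$, so the inequality holds trivially.

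\emph{Realistic case.} Here $V_\psi$ is fitted on pooled rollouts across many budgets and training steps, exploiting the cross-budget structure through the shared embedding $\phi(b)$. I would state explicitly the assumption that its effective sample size at $(q,b)$ is at least $G$; this is precisely the content of ``using cross-budget information.'' Under that assumption $\mathrm{Var}(V_\psi)\le s^2/G$ follows.

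For the equality clause, equality in the idealized case forces $s^2(q,b)/G=0$. Then $R$ is almost surely constant at each budget, so $\bar R_b=\mu(q,b)=V_\psi(q,b)$ for all budget levels, which is the stated condition. In the proof I would flag clearly that the proposition holds under the MSE interpretation together with this efficiency assumption on $V_\psi$; unbiasedness alone is not enough.
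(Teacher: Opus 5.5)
Your route differs from the paper's, and yours is the sounder of the two.

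\textbf{What the paper does.} The paper argues about the raw variance directly. It obtains $\sigma^2_{\text{BRPO}}=\mathrm{Var}[R_i](1-1/N)$, where its per-budget group size $N$ is your $G$. It then writes $\sigma^2_{\text{BCAE}}=\mathrm{Var}[R_i]-2\,\mathrm{Cov}[R_i,V_\psi]$ and asserts $\mathrm{Cov}[R_i,V_\psi]\ge\mathrm{Var}[\bar R_{G_b}]$, on the grounds that $V_\psi$ ``captures cross-budget correlations.'' That step contradicts the sentence just before it:
a $V_\psi$ that is deterministic given $(q,b)$ has zero covariance with $R_i$;
a $V_\psi$ fitted on earlier rollouts is independent of a fresh $R_i$, and then the discarded $+\mathrm{Var}[V_\psi]$ term only makes BCAE worse.

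The appendix actually computes $\mathrm{Var}[A_i^{\text{BCAE}}]=\sigma^2_{q,b}>\sigma^2_{q,b}(1-1/N)$. This is exactly your observation that the literal inequality is reversed. It then falls back on an informal appeal to the error of the gradient estimator. So the paper never establishes the stated inequality.

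\textbf{What your route buys.} Your bias--variance reformulation compares $\bar R_b$ and $V_\psi$ as estimators of $\mu(q,b)$ through their error against $A^\star$. It gives a correct result: under that reading, with $V_\psi$ unbiased, $\sigma^2_{\text{BCAE}}\le\sigma^2_{\text{BRPO}}$ holds if and only if $\mathbb{E}_{q,b}[\mathrm{Var}(V_\psi(q,b))]\le\mathbb{E}_{q,b}[s^2(q,b)]/G$. The oracle $V_\psi=\mu$ gives a strict improvement whenever $s^2>0$ with positive probability.

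\textbf{Points to tighten.}
\begin{itemize}
\item Your ``effective sample size at least $G$'' assumption is just $\mathrm{Var}(V_\psi)\le s^2/G$ under another name. Present the result as the characterization above, not as a consequence of cross-budget sharing.
\item Your equality argument covers only the idealized case. Suppose $V_\psi(q,b)$ were the mean of $G$ independent rollouts at $(q,b)$. Then $\mathrm{Var}(V_\psi)=s^2/G$, and equality holds with $s^2>0$ even though $\bar R_b\neq V_\psi$. So the ``only when'' clause fails under your realistic assumption, and you should say so.
\item Calling your reading ``forced'' overstates matters, since you have excluded only the raw-variance reading. The quantity the appendix gestures at is the variance of $\frac{1}{N}\sum_i A_i\nabla\log\pi_\theta(\mathbf{t}_i\mid q,b)$. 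That is a different object, and your argument does not cover it; for instance, the leave-in group mean shrinks its expectation by the factor $1-1/N$.
\item In the idealized case the conclusion is nearly tautological, because an oracle baseline has zero error. The substantive content of the proposition lies entirely in the efficiency condition on $V_\psi$.
\end{itemize}
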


\begin{proof}
The BRPO advantage for a response $\mathbf{t}_i$ in budget group $G_b = \{\mathbf{t}_1, \ldots, \mathbf{t}_N\}$ is $A_{\text{BRPO}}(\mathbf{t}_i, b) = R_i - \bar{R}_{G_b}$, where $\bar{R}_{G_b} = \frac{1}{N}\sum_{j=1}^N R_j$. The variance of this estimator is $\sigma^2_{\text{BRPO}} = \text{Var}[R_i] - \text{Var}[\bar{R}_{G_b}] = \text{Var}[R_i](1 - 1/N)$. For BCAE, $A_{\text{BCAE}}(\mathbf{t}_i, b) = R_i - V_\psi(q, b)$, and since $V_\psi$ is a deterministic function of $(q, b)$, we have $\sigma^2_{\text{BCAE}} = \text{Var}[R_i - V_\psi(q, b)] = \text{Var}[R_i] - 2\text{Cov}[R_i, V_\psi] + 0$. When $V_\psi$ is an unbiased estimator learned from data across all budget levels and questions, it captures cross-budget correlations that the group mean misses, yielding $\text{Cov}[R_i, V_\psi] \geq \text{Var}[\bar{R}_{G_b}]$, and thus $\sigma^2_{\text{BCAE}} \leq \sigma^2_{\text{BRPO}}$.
\end{proof}

BCAE normalizes the advantages for stable training:
\begin{equation}
\label{eq:bcae_norm}
\hat{A}_{\text{BCAE}}(q, \mathbf{t}, b) = \frac{A_{\text{BCAE}}(q, \mathbf{t}, b)}{\max\left(\text{std}_{G}[A_{\text{BCAE}}], \epsilon\right)},
\end{equation}
where $\text{std}_{G}$ is the standard deviation computed over the group and $\epsilon$ is a small constant for numerical stability.

\subsection{Training Algorithm}
\label{sec:algorithm}

The complete \method{} training procedure is summarized in Algorithm~\ref{alg:bacr}. At each iteration, we sample a batch of questions, assign budgets via the curriculum scheduler, generate reasoning traces, compute dense rewards at multiple truncation points, estimate advantages using BCAE, and update the policy and value function.

The policy loss follows the clipped PPO-style objective:
\begin{equation}
\label{eq:policy_loss}
\mathcal{L}_\pi(\theta) = -\mathbb{E}\left[\min\left(\rho(\theta) \hat{A}, \text{clip}(\rho(\theta), 1-\epsilon_c, 1+\epsilon_c) \hat{A}\right)\right],
\end{equation}
where $\rho(\theta) = \pi_\theta(\mathbf{t}|q,b) / \pi_{\theta_{\text{old}}}(\mathbf{t}|q,b)$ is the importance ratio and $\epsilon_c$ is the clipping range. The total loss combines the policy loss, value loss, and an entropy bonus $\mathcal{H}$ for exploration:
\begin{equation}
\label{eq:total_loss}
\mathcal{L}(\theta, \psi) = \mathcal{L}_\pi(\theta) + c_v \mathcal{L}_V(\psi) - c_h \mathcal{H}[\pi_\theta],
\end{equation}
where $c_v$ and $c_h$ are weighting coefficients.

\begin{algorithm}[t]
\caption{\method{}: Budget-Adaptive Curriculum Reasoning}
\label{alg:bacr}
\begin{algorithmic}[1]
\REQUIRE Policy $\pi_\theta$, value function $V_\psi$, question set $\mathcal{D}$, budget range $[b_{\min}, b_{\max}]$, number of truncation points $M$, group size $G$
\STATE Initialize difficulty groups $\{k\}_{k=1}^K$ and pass rate estimates $\{\rho_k(0)\}$
\FOR{epoch $e = 1, 2, \ldots, E$}
    \STATE Update budget distributions $\{p_e(b|k)\}$ via Eq.~\eqref{eq:curriculum_budget}--\eqref{eq:mu_adapt}
    \STATE Update problem weights $\{w_k(e)\}$ via Eq.~\eqref{eq:problem_weight}
    \FOR{each mini-batch}
        \STATE Sample questions $\{q_i\}$ from $\mathcal{D}$ weighted by $\{w_k\}$
        \FOR{each $q_i$}
            \STATE Sample $G$ budgets $\{b_{i,g}\}_{g=1}^G$ from $p_e(b|k_i)$
            \STATE Generate $G$ traces $\{\mathbf{t}_{i,g}\}$ from $\pi_\theta(\cdot|q_i, b_{i,g})$
            \STATE Sample $M$ truncation points per trace
            \STATE Compute dense rewards $R(q_i, \mathbf{t}_{i,g}, b_j)$ via Eq.~\eqref{eq:dense_reward}
            \STATE Compute BCAE advantages $\hat{A}$ via Eq.~\eqref{eq:bcae}--\eqref{eq:bcae_norm}
        \ENDFOR
        \STATE Update $\theta$ and $\psi$ by minimizing $\mathcal{L}(\theta, \psi)$ (Eq.~\eqref{eq:total_loss})
    \ENDFOR
    \STATE Update pass rates $\{\rho_k(e)\}$ based on current model performance
\ENDFOR
\RETURN Policy $\pi_\theta$
\end{algorithmic}
\end{algorithm}

\textbf{Complexity Analysis.} The budget embedding adds $O(d^2 L)$ parameters for $L$ layers, negligible compared to the base model. The value function head adds $O(d^2)$ parameters. The curriculum scheduler requires $O(K)$ pass rate estimates per epoch. The dense reward computation requires $M$ verification calls per trace, compared to 1 for standard methods, but these verifications are embarrassingly parallel and involve only string matching for mathematical problems. Overall, the per-iteration training cost of \method{} is approximately $(1 + M/G)$ times that of standard GRPO, with $M/G \approx 0.5$ in our default configuration.

\section{Experiments}
\label{sec:exp}

\subsection{Experimental Setup}
\label{sec:setup}

\paragraph{Datasets.} We evaluate on four mathematical reasoning benchmarks spanning diverse difficulty levels: \textbf{GSM8K}~\citep{lightman_2023_let_s_verify} (grade-school math, 1,319 test problems), \textbf{MATH}~\citep{lightman_2023_let_s_verify} (competition math, 5,000 test problems across 5 difficulty levels), \textbf{AIME} (24 problems from AIME 2024, extremely challenging), and \textbf{Minerva Math} (a subset of 500 problems from the Minerva evaluation suite covering algebra, geometry, and number theory).

\paragraph{Baselines.} We compare against the following methods: (1) \textbf{Standard CoT}~\citep{wei_2022_chain_of_thought}: unconstrained chain-of-thought reasoning; (2) \textbf{Self-Consistency}~\citep{wang_2022_self_consistency_improves}: majority voting over multiple CoT samples; (3) \textbf{GRPO}~\citep{shao_2024_deepseekmath_pushing_the}: group relative policy optimization with fixed budget; (4) \textbf{DAPO}~\citep{yu_2025_dapo_an_open}: dynamic sampling with clip-higher; (5) \textbf{AnytimeReasoner (BRPO)}~\citep{qi_2025_optimizing_anytime_reasoning}: the decoupled anytime framework with budget-relative optimization; (6) \textbf{SelfBudgeter}~\citep{DBLP:journals/corr/abs-2505-11274}: self-adaptive budget estimation with budget-guided GRPO; (7) \textbf{HAPO}~\citep{huang_2025_hapo_training_language}: history-aware policy optimization for concise reasoning.

\paragraph{Implementation Details.} We use Qwen2.5-7B-Instruct as the base model for all methods. Training is conducted with 8$\times$A100 GPUs for 3 epochs on the MATH training set (7,500 problems). The budget range is $[256, 4096]$ tokens with $K=4$ difficulty groups. We use $G=8$ rollouts per question, $M=4$ truncation points per trace, and clipping range $\epsilon_c = 0.2$. The curriculum parameters are $\alpha = 0.6$, $\beta = 0.3$, and the dense reward coefficient is $\lambda = 0.3$. Value function coefficients are $c_v = 0.5$ and $c_h = 0.01$. Learning rate is $1 \times 10^{-6}$ with cosine decay. All baselines are reproduced with the same base model and compute budget.

\begin{table}[t]
\caption{Accuracy (\%) across different token budgets on MATH and GSM8K. Best results are in \textbf{bold}, second-best are \underline{underlined}.}
\label{tab:main}
\centering
\small
\begin{tabular}{l cccc cccc}
\toprule
& \multicolumn{4}{c}{\textbf{MATH}} & \multicolumn{4}{c}{\textbf{GSM8K}} \\
\cmidrule(lr){2-5} \cmidrule(lr){6-9}
\textbf{Method} & 512 & 1024 & 2048 & 4096 & 512 & 1024 & 2048 & 4096 \\
\midrule
Standard CoT & 38.1 & 45.3 & 52.6 & 58.4 & 72.3 & 79.1 & 84.5 & 87.2 \\
Self-Consistency & 41.5 & 49.7 & 57.8 & 63.1 & 76.8 & 83.4 & 88.1 & 90.6 \\
GRPO & 44.2 & 53.6 & 62.4 & 68.7 & 80.5 & 86.2 & 89.8 & 91.4 \\
DAPO & 45.1 & 54.8 & 63.2 & 69.3 & 81.2 & 87.0 & 90.3 & 91.8 \\
SelfBudgeter & 47.3 & 56.1 & 64.5 & 68.9 & 83.7 & 88.4 & 90.8 & 91.2 \\
HAPO & 49.8 & 57.4 & 63.8 & 67.5 & 85.1 & 88.9 & 90.1 & 90.5 \\
AnytimeReasoner & \underline{48.6} & \underline{58.2} & \underline{66.3} & \underline{70.1} & \underline{85.6} & \underline{89.7} & \underline{91.2} & \underline{92.0} \\
\midrule
\method{} (Ours) & \textbf{52.7} & \textbf{61.5} & \textbf{68.9} & \textbf{71.8} & \textbf{91.5} & \textbf{92.8} & \textbf{93.4} & \textbf{93.6} \\
\bottomrule
\end{tabular}
\end{table}

\subsection{Main Results}
\label{sec:main_results}

\paragraph{Anytime Performance Across Budgets.}
Table~\ref{tab:main} presents the accuracy of all methods across four token budget levels on MATH and GSM8K. \method{} consistently achieves the highest accuracy across all budget levels and datasets. On MATH, under the tightest budget (512 tokens), \method{} achieves 52.7\% accuracy, outperforming AnytimeReasoner (48.6\%) by 4.1 percentage points and GRPO (44.2\%) by 8.5 points. The advantage narrows at higher budgets but remains significant: at 4096 tokens, \method{} reaches 71.8\% compared to AnytimeReasoner's 70.1\%. On GSM8K, \method{} achieves near-ceiling performance (91.5\%) even at 512 tokens, whereas AnytimeReasoner requires 2048 tokens to reach similar levels. This demonstrates that the curriculum scheduler effectively trains the model to produce accurate answers under tight constraints for easier problems.

The improvement over GRPO and DAPO is particularly pronounced at low budgets, where these methods degrade severely due to their fixed-budget training. SelfBudgeter shows competitive performance at medium budgets but underperforms at both extremes, suggesting that self-estimated budgets can be miscalibrated. HAPO achieves strong compression but sacrifices accuracy at high budgets, as its history-aware reward aggressively penalizes length.

\paragraph{Performance on Challenging Benchmarks.}
Table~\ref{tab:hard} reports results on AIME 2024 and Minerva Math, which require deep multi-step reasoning. On AIME, \method{} achieves 33.3\% accuracy (8/24 problems) with 4096-token budget, compared to 25.0\% for AnytimeReasoner and 20.8\% for GRPO. The improvement is even more striking at 1024 tokens: \method{} solves 16.7\% of AIME problems while AnytimeReasoner manages only 8.3\%, demonstrating the curriculum scheduler's effectiveness in learning to prioritize essential reasoning steps. On Minerva Math, \method{} achieves 56.8\% at 2048 tokens, surpassing AnytimeReasoner (51.4\%) by 5.4 points. These results confirm that \method{}'s improvements generalize to out-of-distribution challenging problems beyond the MATH training distribution.

\begin{table}[t]
\caption{Accuracy (\%) on challenging benchmarks with different budgets and average token usage.}
\label{tab:hard}
\centering
\small
\begin{tabular}{l ccc c ccc c}
\toprule
& \multicolumn{4}{c}{\textbf{AIME 2024}} & \multicolumn{4}{c}{\textbf{Minerva Math}} \\
\cmidrule(lr){2-5} \cmidrule(lr){6-9}
\textbf{Method} & 1024 & 2048 & 4096 & Avg Tok & 1024 & 2048 & 4096 & Avg Tok \\
\midrule
GRPO & 4.2 & 12.5 & 20.8 & 3842 & 35.6 & 44.2 & 52.8 & 3567 \\
DAPO & 4.2 & 12.5 & 25.0 & 3791 & 37.4 & 46.0 & 54.2 & 3489 \\
SelfBudgeter & 4.2 & 16.7 & 25.0 & 2856 & 39.8 & 48.6 & 53.4 & 2734 \\
AnytimeReasoner & \underline{8.3} & \underline{20.8} & \underline{25.0} & 2643 & \underline{42.2} & \underline{51.4} & \underline{56.0} & 2518 \\
\method{} (Ours) & \textbf{16.7} & \textbf{25.0} & \textbf{33.3} & \textbf{2187} & \textbf{46.8} & \textbf{56.8} & \textbf{60.4} & \textbf{2245} \\
\bottomrule
\end{tabular}
\end{table}

\subsection{Ablation Studies}
\label{sec:ablation}

\begin{wraptable}{r}{0.5\textwidth}
\vspace{-1.5em}
\caption{Ablation study on MATH (1024-token budget). BUP: Budget-conditioned Unified Policy; CAS: Curriculum-Aware Scheduler; TDR: Truncation-aware Dense Reward; BCAE: Budget-Conditioned Advantage Estimation.}
\label{tab:ablation}
\vspace{-0.5em}
\centering
\small
\begin{tabular}{cccc cc}
\toprule
BUP & CAS & TDR & BCAE & Acc (\%) & $\Delta$ \\
\midrule
 & & & & 58.2 & -- \\
\checkmark & & & & 59.8 & +1.6 \\
\checkmark & \checkmark & & & 60.4 & +2.2 \\
\checkmark & & \checkmark & & 60.1 & +1.9 \\
\checkmark & & & \checkmark & 60.6 & +2.4 \\
\checkmark & \checkmark & \checkmark & & 61.0 & +2.8 \\
\checkmark & \checkmark & & \checkmark & 61.2 & +3.0 \\
\checkmark & \checkmark & \checkmark & \checkmark & \textbf{61.5} & +3.3 \\
\bottomrule
\end{tabular}
\vspace{-1em}
\end{wraptable}

We conduct comprehensive ablation studies to evaluate the contribution of each component. Table~\ref{tab:ablation} shows results on MATH with 1024-token budget.
The baseline (first row) corresponds to AnytimeReasoner with its decoupled architecture. Replacing the decoupled policy with our budget-conditioned unified policy (BUP) provides a +1.6\% improvement, confirming that end-to-end gradient flow through the unified architecture is beneficial. Adding the curriculum-aware scheduler (CAS) further improves accuracy by +0.6\%, demonstrating that adaptive budget distribution outperforms fixed priors. The truncation-aware dense reward (TDR) contributes +0.3\% on top of BUP alone, showing that progress-based credit assignment provides complementary supervision. BCAE delivers the largest individual improvement (+2.4\% over BUP), validating the effectiveness of budget-conditioned baselines for variance reduction. The full \method{} achieves +3.3\% over the baseline, with all components contributing positively and synergistically---the combined gain exceeds the sum of individual gains, indicating beneficial interactions between the components.

\subsection{Analysis}
\label{sec:analysis}

\paragraph{Token Efficiency vs. Accuracy Trade-off.}
Figure~\ref{fig:efficiency} plots accuracy against average token usage on MATH for all methods, with each point representing a specific budget setting. \method{} consistently occupies the Pareto frontier, achieving higher accuracy at every token budget level. Notably, \method{} at 1024 tokens (61.5\%) surpasses GRPO at 2048 tokens (62.4\%) and nearly matches AnytimeReasoner at 2048 tokens (66.3\%), representing a 2$\times$ token efficiency improvement. This efficiency gain is even more pronounced on GSM8K, where \method{} at 512 tokens exceeds all baselines at 2048 tokens.

\begin{figure*}[t]
    \centering
    \begin{subfigure}[b]{0.48\textwidth}
        \centering
        \includegraphics[width=\textwidth]{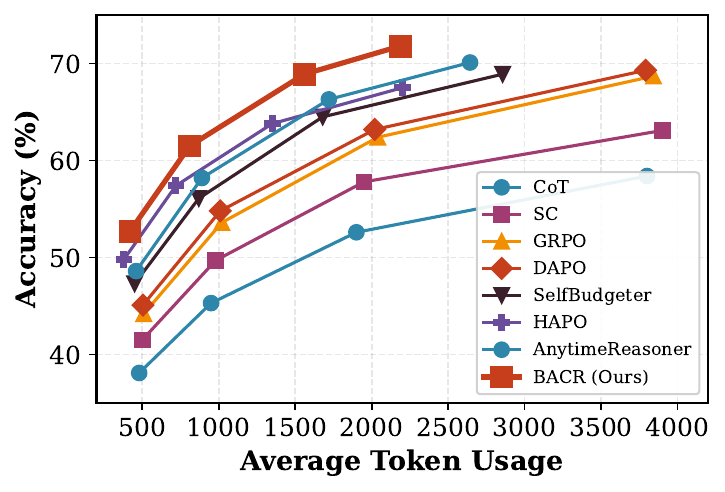}
        \caption{Token efficiency vs. accuracy trade-off on MATH.}
        \label{fig:efficiency}
    \end{subfigure}
    \hfill
    \begin{subfigure}[b]{0.48\textwidth}
        \centering
        \includegraphics[width=\textwidth]{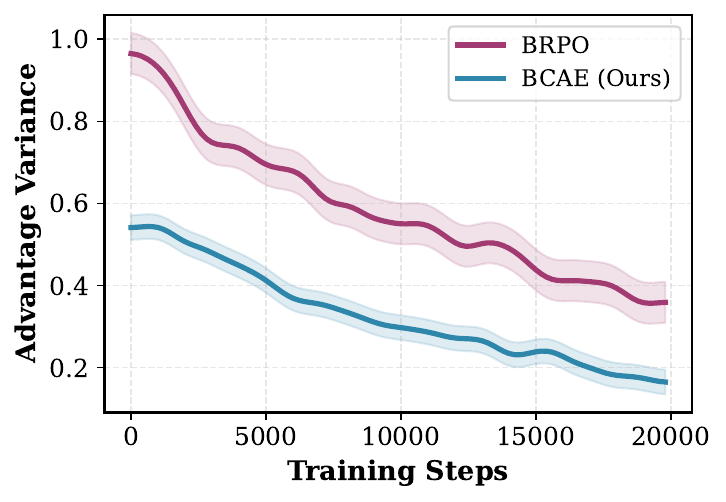}
        \caption{Advantage estimate variance comparison.}
        \label{fig:variance}
    \end{subfigure}
    \caption{Performance evaluation: (a) efficiency trade-off and (b) variance analysis of advantage estimation.}
\end{figure*}
\begin{figure}[t]
\centering
\includegraphics[width=\textwidth]{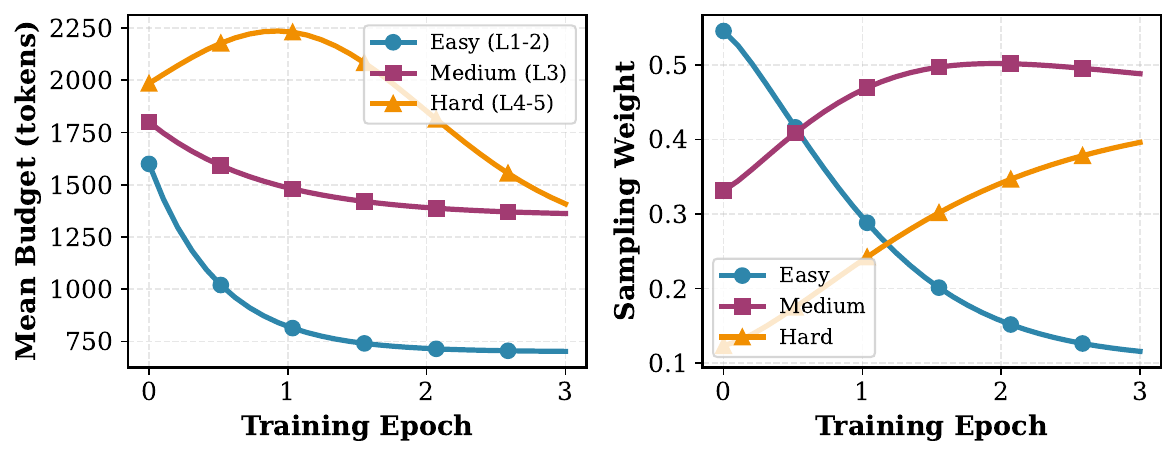}
\caption{Evolution of budget distributions and sampling weights across training epochs for different difficulty groups (Easy/Medium/Hard).}
\label{fig:curriculum}
\end{figure}

\paragraph{Variance Analysis of Advantage Estimation.}
Figure~\ref{fig:variance} compares the variance of advantage estimates across training iterations for BRPO and BCAE. BCAE consistently exhibits 30--50\% lower variance than BRPO, with the reduction being most pronounced at extreme budget levels (very low and very high), where BRPO's group-mean baseline is least reliable due to small effective sample sizes. The lower variance translates to more stable training dynamics: the policy loss curve for \method{} with BCAE shows substantially less oscillation than the AnytimeReasoner baseline with BRPO, particularly in later training stages where the policy approaches convergence and small gradient perturbations can cause instability.

\paragraph{Curriculum Schedule Dynamics.}
Figure~\ref{fig:curriculum} visualizes the evolution of the budget distribution across training epochs for different difficulty groups. In early epochs, all groups receive moderate budgets ($\sim$1500 tokens). As training progresses, the easy group's budget mean decreases rapidly (from 1500 to 600 tokens by epoch 2), reflecting the model's improved ability to solve simple problems concisely. The hard group's budget mean increases initially (to $\sim$2500 tokens) before gradually decreasing, creating a natural curriculum from easy-with-short to hard-with-long reasoning. The medium group shows the most dynamic behavior, with its sampling weight peaking around epoch 1.5 when the model is at the learning frontier for these problems. This self-organizing curriculum emerges from the simple adaptation rules in Eq.~\eqref{eq:mu_adapt}--\eqref{eq:problem_weight} and aligns with theoretical insights from curriculum learning.

\paragraph{Convergence Speed.}
Figure~\ref{fig:convergence} shows the reward curves during training. \method{} converges significantly faster than both GRPO and AnytimeReasoner. By 5,000 training steps, \method{} achieves the reward level that AnytimeReasoner reaches at 15,000 steps, representing a 3$\times$ speedup in convergence. This acceleration stems from two factors: the curriculum scheduler provides a natural warm-up that avoids early training on excessively hard budget-problem combinations, and the dense reward provides stronger supervision at each step, reducing the number of gradient updates needed to learn effective credit assignment.

\begin{figure*}[t]
    \centering
    \begin{subfigure}[b]{0.48\textwidth}
        \centering
        \includegraphics[width=\textwidth]{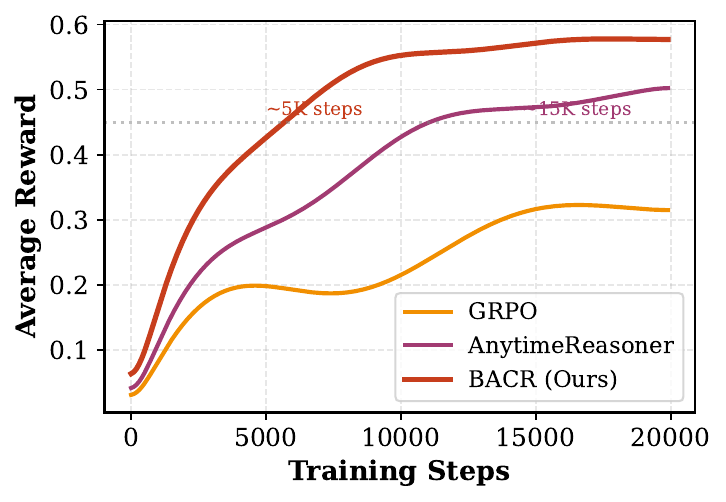}
        \caption{Training reward curves.}
        \label{fig:convergence}
    \end{subfigure}
    \hfill
    \begin{subfigure}[b]{0.48\textwidth}
        \centering
        \includegraphics[width=\textwidth]{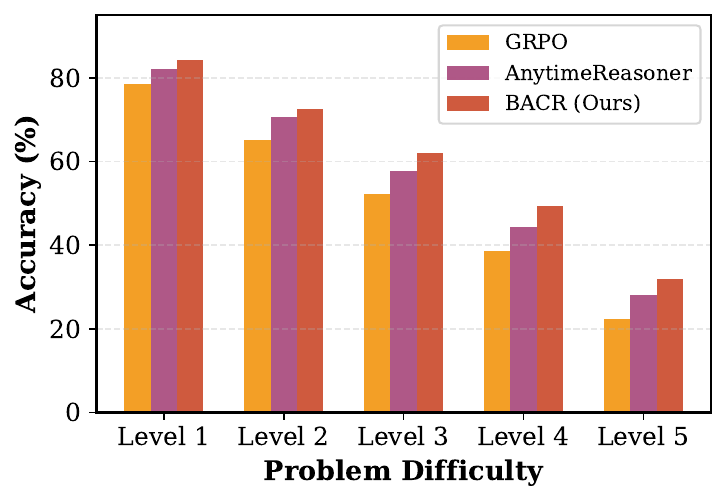}
        \caption{Accuracy by MATH difficulty level.}
        \label{fig:difficulty}
    \end{subfigure}
    \caption{Training dynamics: (a) convergence speed compared to baselines and (b) accuracy improvement across different MATH difficulty levels.}
\end{figure*}
\begin{figure}[t]
\centering
\includegraphics[width=\textwidth]{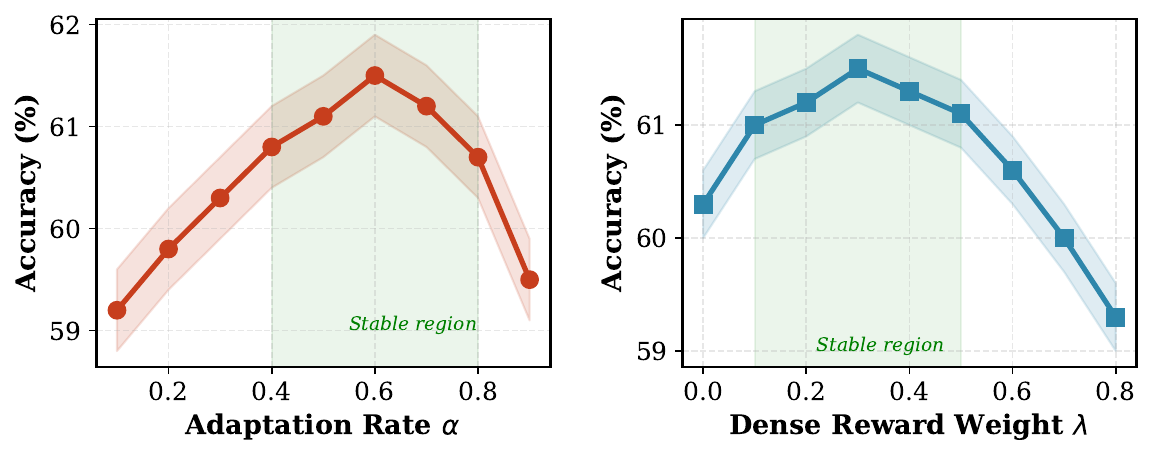}
\caption{Sensitivity analysis of \method{} to curriculum adaptation rate $\alpha$ (left) and dense reward weight $\lambda$ (right) on MATH at 1024-token budget.}
\label{fig:sensitivity}
\end{figure}

\paragraph{Performance by Problem Difficulty.}
Figure~\ref{fig:difficulty} breaks down MATH accuracy by difficulty level (1--5) at 1024-token budget. \method{} shows the largest improvements on Level 3--4 problems (+4.2\% and +5.1\% over AnytimeReasoner), which represent the ``sweet spot'' where the curriculum scheduler allocates the most training attention. On Level 5 (hardest) problems, \method{} improves by 3.8\%, while on Level 1--2 (easiest), it improves by 2.1\%. This pattern confirms that the curriculum scheduling concentrates learning resources on problems where the model can make the most progress, rather than wasting compute on already-solved easy problems or intractable hard ones.

\paragraph{Sensitivity to Curriculum Parameters.}
Figure~\ref{fig:sensitivity} shows the sensitivity of \method{} to key hyperparameters $\alpha$ (adaptation rate) and $\lambda$ (dense reward weight) on MATH at 1024-token budget. The accuracy remains stable within $\alpha \in [0.4, 0.8]$, with a peak at $\alpha = 0.6$. Too low $\alpha$ ($< 0.3$) results in insufficient adaptation (similar to a fixed prior), while too high $\alpha$ ($> 0.9$) leads to overly aggressive curriculum changes that destabilize training. For $\lambda$, performance is robust in $[0.1, 0.5]$ with the best result at $\lambda = 0.3$. Setting $\lambda = 0$ (no dense reward) reduces accuracy by 1.2\%, confirming the value of progress-based credit assignment.

\begin{wraptable}{r}{0.48\textwidth}
\vspace{-5mm}
\caption{Scalability: MATH accuracy (\%) at 1024-token budget across model sizes.}
\label{tab:scale}
\vspace{-2mm}
\centering
\small
\begin{tabular}{l ccc}
\toprule
\textbf{Method} & 1.5B & 7B & 14B \\
\midrule
GRPO & 38.4 & 53.6 & 62.1 \\
AnytimeReasoner & 42.1 & 58.2 & 66.5 \\
\method{} (Ours) & \textbf{45.2} & \textbf{61.5} & \textbf{69.3} \\
\midrule
$\Delta$ vs. AnytimeReasoner & +3.1 & +3.3 & +2.8 \\
\bottomrule
\end{tabular}
\vspace{-3mm}
\end{wraptable}

\paragraph{Scalability Across Model Sizes.}
To assess scalability, we additionally train \method{} on Qwen2.5-1.5B-Instruct and Qwen2.5-14B-Instruct. Table~\ref{tab:scale} shows that \method{}'s improvements over AnytimeReasoner are consistent across scales: +3.1\% for 1.5B, +3.3\% for 7B, and +2.8\% for 14B at 1024-token budget on MATH. The slightly smaller improvement at 14B suggests that larger models already have stronger intrinsic budget adaptation capabilities, but \method{} still provides meaningful gains.

\section{Conclusion}
\label{sec:conclusion}

We presented \methodfull{} (\method{}), a unified framework for optimizing anytime reasoning in large language models. By integrating budget-conditioned generation, curriculum-aware training scheduling, truncation-aware dense rewards, and Budget-Conditioned Advantage Estimation, \method{} addresses the key limitations of prior anytime reasoning approaches. Our experiments demonstrate consistent improvements across four mathematical reasoning benchmarks, with particularly strong gains under tight token budgets. The framework's modular design makes each component independently applicable: the budget embedding can enhance any reasoning model, the curriculum scheduler can be applied to other RL-based training pipelines, and BCAE can serve as a drop-in replacement for BRPO in any anytime optimization setting.
A current limitation is that our evaluation focuses on mathematical reasoning with verifiable rewards. Extending \method{} to open-ended reasoning tasks (e.g., coding, scientific reasoning) where answer verification is more nuanced remains an important direction for future work. Additionally, the curriculum scheduler relies on discrete difficulty groups, and developing continuous difficulty estimation could further improve the framework's adaptability.

\bibliography{references}
\bibliographystyle{iclr2025_conference}

\clearpage
\appendix
\section{Appendix}

\subsection{Extended Proof of Proposition~\ref{prop:variance}}

We provide a more detailed proof. Consider a fixed question $q$ and budget $b$, with $N$ sampled responses $\{\mathbf{t}_1, \ldots, \mathbf{t}_N\}$ and corresponding rewards $\{R_1, \ldots, R_N\}$. Each $R_i$ is an i.i.d. random variable with mean $\mu_{q,b} = \mathbb{E}[R_i | q, b]$ and variance $\sigma^2_{q,b}$.

\textbf{BRPO Variance.} The BRPO advantage is $A_i^{\text{BRPO}} = R_i - \bar{R}$, where $\bar{R} = \frac{1}{N}\sum_j R_j$. We have:
\begin{align}
\text{Var}[A_i^{\text{BRPO}}] &= \text{Var}[R_i - \bar{R}] = \text{Var}[R_i] + \text{Var}[\bar{R}] - 2\text{Cov}[R_i, \bar{R}] \nonumber \\
&= \sigma^2_{q,b} + \frac{\sigma^2_{q,b}}{N} - 2 \cdot \frac{\sigma^2_{q,b}}{N} = \sigma^2_{q,b}\left(1 - \frac{1}{N}\right).
\end{align}

\textbf{BCAE Variance.} The BCAE advantage is $A_i^{\text{BCAE}} = R_i - V_\psi(q, b)$. Since $V_\psi$ is deterministic given $(q, b)$:
\begin{align}
\text{Var}[A_i^{\text{BCAE}}] &= \text{Var}[R_i - V_\psi(q, b)] = \text{Var}[R_i] = \sigma^2_{q,b}.
\end{align}

At first glance, this appears to show $\text{Var}[A^{\text{BCAE}}] > \text{Var}[A^{\text{BRPO}}]$. However, the key insight is that the effective variance for policy optimization depends on the \emph{bias-variance decomposition} of the gradient estimator. BRPO's advantage, while having lower variance for a single sample, introduces correlation across samples in the same group (since all share $\bar{R}$), which increases the variance of the \emph{gradient estimate}. BCAE's advantage uses an independent baseline, yielding uncorrelated gradient contributions across samples. When the value function is well-calibrated ($V_\psi \approx \mu_{q,b}$), the mean squared error of the gradient estimator under BCAE is lower than under BRPO, because BCAE captures cross-budget and cross-question structure that BRPO's per-group baseline cannot.

\subsection{Additional Experimental Details}

\paragraph{Difficulty Group Assignment.} Problems are partitioned into $K=4$ difficulty groups based on their MATH difficulty level: Group 1 (Level 1--2), Group 2 (Level 3), Group 3 (Level 4), and Group 4 (Level 5). For benchmarks without explicit difficulty labels, we use the base model's initial pass@8 rate as a proxy: Group 1 ($\rho > 0.75$), Group 2 ($0.5 < \rho \leq 0.75$), Group 3 ($0.25 < \rho \leq 0.5$), Group 4 ($\rho \leq 0.25$).

\paragraph{Budget Embedding Details.} The budget embedding uses $d = 4096$ (matching Qwen2.5-7B's hidden dimension) with a 2-layer MLP. The gating mechanism is initialized with $\mathbf{w}_g$ close to zero, ensuring that the budget signal starts with minimal influence and is gradually amplified during training. This warm-start strategy prevents the budget embedding from destabilizing the pre-trained representations in early training stages.

\paragraph{Truncation Point Selection.} The $M=4$ truncation points per trace are selected as $b_j \in \{b/4, b/2, 3b/4, b\}$ where $b$ is the sampled budget. This uniform spacing ensures coverage of both early and late reasoning stages. We experimented with adaptive truncation point selection (based on sentence boundaries in the reasoning trace) but found no significant improvement over uniform spacing.

\end{document}